\documentclass{llncs}
\usepackage{graphicx}      
\usepackage{amsmath} 
\usepackage{amssymb}
\usepackage{tensor} 
\usepackage{cite}
\usepackage{graphicx}
\usepackage{floatrow}
\newfloatcommand{capbtabbox}{table}[][\FBwidth]

\usepackage{blindtext}

\newcommand{\Proj}{\mathrm{P}{\,}}

\newcommand{\ProjNonOrth}[2]{\tensor*{\Proj}{^{#1}_{#2}}}
\newcommand{\LaplaceBeltrami}{\mathrm{\Delta_{{LB}}}}
\newcommand{\partderiv}[2]{\partial_{#2} {#1}}

\newcommand{\toreal}{\rightarrow\bbbr}

\newcommand{\CovariantDiffManif}[1]{\nabla^{#1}}
\newcommand{\CovariantDerivManif}[2]{\tensor*{\nabla}{^{#1}_{#2}}}
\newcommand{\CovariantDiff}{\nabla}
\newcommand{\CovariantDeriv}[1]{\nabla_{#1}}
\newcommand{\Diff}{\mathrm{d}}
\newcommand{\TangentSpaceArg}[2]{{T_{#2}}{#1}}

\newcommand{\TangentBundle}[1]{T{#1}}
\newcommand{\CotangentBundle}[1]{\tensor*{T}{^{*}}{#1}}
\newcommand{\FRScalar}{BR_{\mathrm{scalar}}}
\newcommand{\FRMean}{BR_{\mathrm{mean}}}
\newcommand {\tr}{{\,}\mathrm{tr}{\,}}
\newcommand {\Preimage}[2]{{#2}^*{#1}}

\newcommand \TPreimage[2]{\Preimage{\TangentBundle{#1}}{#2}}

\newcommand {\bigeps}{\mathcal{E}}

\begin{document}

\title{A Riemanian Approach to Blob Detection in Manifold-Valued Images}%

\author{Aleksei Shestov\inst{1} \and Mikhail Kumskov\inst{1}} 
 \institute{Lomonosov Moscow State University, Faculty of Mechanics and Mathematics, Russia, 119991, Moscow, GSP-1, 1 Leninskiye Gory, Main Building
\\
\email{\{shestov.msu,mkumskov\}@gmail.com}}

\maketitle              

\begin{abstract}
This paper is devoted to the problem of blob detection in manifold-valued images. Our solution is based on new definitions of blob response functions. We define the blob response functions by means of curvatures of an image graph, considered as a submanifold. We call the proposed framework Riemannian blob detection. We prove that our approach can be viewed as a generalization of the grayscale blob detection technique. An expression of the Riemannian blob response functions through the image Hessian is derived. We provide experiments for the case of vector-valued images on 2D surfaces: the proposed framework is tested on the task of chemical compounds classification.
\keywords{blob detection, image processing, manifold-valued images, vector-valued images, differential geometry} \end{abstract}  
\section{Introduction}
Blob detection \cite{blob} is a widely used method of keypoints detection in grayscale images. Informally speaking, blob detection aims to find ellipse-like regions of different sizes with “similar” intensity inside. Blobs are sought as local extremums of a blob response function. Several color blob detection algorithms were proposed in \cite{ColorBlob,GROM}. Blob detection has applications in 3D face recognition, object recognition, panorama stitching, 3D scene modeling, tracking, action recognition, medical images processing, etc.
\\
Our goal is to propose a blob detection framework for the general setting of an image being a map between Riemannian manifolds. Our approach is based on a definition of blob response functions by means of image graph curvatures. Furthermore, we derive the expression of Riemannian blob response functions through image Hessian. This expression shows that Riemannian blob detection coincides with the classical blob detection framework for the grayscale case. Also this expression provides a more convenient way to calculate Riemannian blob response functions for vector- and manifold-valued images.
\\
Research of connections between image processing methods and image graph geometry is of its own interest. This research helps deeply understand traditional methods, provides insights and gives natural generalizations of classical methods to vector-valued and manifold-valued images \cite{Saucan,Kimmel,Batard}. Connections between the blob response functions and image graph curvatures were mentioned in papers \cite{BlobCurv1,BlobCurv2}. Our work is the first to accurately analyze this question in the general setting.

\subsubsection{Contributions:}
\begin{enumerate}
\item We are the first to provide a blob detection framework for the general setting of an image being a map between manifolds. This framework can be viewed as a generalization of grayscale blob detection. Our framework provides blob response functions for the previously uncovered problems: blob detection in color images on manifold domain and blob detection in manifold-valued images (both on Euclidian and manifold domains). 
\item We are the first to analyze connections between the blob response functions and curvatures of image graph both for Euclidian and manifold domains.
\item The experiments on the task of chemical compounds classification show the effectiveness of our approach for the case of vector-valued images on 2d surfaces.  
\end{enumerate}

\section{The Problem Introduction}
Blob detection was firstly proposed for grayscale images on 2D Euclidian domain \cite{blob}. In \cite{ScalarBlob3D} blob detection was generalized to 2D surfaces. Several approaches to generalization of blob detection to color case were proposed in \cite{ColorBlob,GROM}. However, these approaches are based on global or local conversion of a color image to the grayscale, so they can't be used for manifold-valued images.
\\
Consider a grayscale image $I(x):X \toreal$ on a smooth 2-dimensional manifold $X$. The blob detection framework by \cite{ScalarBlob3D} is as follows:
\begin{enumerate} 
\item Calculate the scale-space $L(x,t):X\times \bbbr_{+} \toreal$. $L(x,t)$ is the solution of the heat equation on the surface
  $\partderiv{L(x, t)}{t}=-\LaplaceBeltrami{ L(x, t)},L(x, 0)=I(x)$, where $\LaplaceBeltrami$ is the Laplace-Beltrami operator;
\item Choose a blob response function and calculate it: 
\begin{equation} \textit{ the determinant blob response: } BR_{\det}(x, t)=\det{H_L(x,t)}\textrm{  or } \label{blob_det}\end{equation} 
\begin{equation} \textit{ the trace blob response: } BR_{\tr}(x, t)=\tr {H_L(x,t)}, \label{blob_tr}											\end{equation} 
 where $H_L$ is the Hessian of $L(x, t)$ as a function of $x$ with fixed $t$;
\item Find blobs centers and scales as $C=\{(x, t)=\arg \min_{x,t} \tilde{BR}(x, t)\textrm{  or } (x, t)=$
$=\arg \max_{x,t}\tilde{BR}(x, t)\}$, where 
$\tilde{BR}=t\,BR_{\tr}$ or $\tilde{BR}=t^2\,BR_{\det}$. Find the blobs radii as $s=\sqrt{2} t$.
\end{enumerate}

For the general case of a map between manifolds $I(x):X \to Y$ the Hessian is the covariant differential of the differential: $H_L=\CovariantDiff \Diff L,H_L \in \CotangentBundle{X}\otimes\CotangentBundle{X}\otimes\TangentBundle{Y}$. Consider the straightforward generalization of the blob detection stages:
\begin{enumerate}
\item Scale-space calculation. $L(x,t)$ is calculated as the solution of $\partderiv{L(x, t)}{t}=$ $=-\tr H_L(x, t), L(x, 0)=I(x)$. Methods of manifold-valued PDEs solution for different cases are discussed in the papers \cite{Harmonic,Kimmel,DTI} and others. These methods are out of scope of our work.
\item Blob response calculation. The determinant blob response $BR_{\det}=\det H_L$ is not defined. 
\item Blobs centers calculation. We can't find maximums or minimums of the trace blob response because it is not scalar-valued: $BR_{\tr}=\tr{H_L} \in TY$.
\end{enumerate} 

We see that there is no straightforward generalization of the blob response functions to the manifold-valued case. How can the problem of blob response generalization be solved? Our key ideas are the following:

\begin{enumerate}
\item	Consider the image graph $Gr$ as a submanifold embedded in $X\times Y$. The grayscale and manifold-valued cases differ only by a co-dimension of the embedding. Then a formulation of the blob response through notions defined for all co-dimensions will give an immediate generalization to the manifold-valued case. 
\item	What notions to use? The scalar and the mean curvatures are defined for all co-dimensions and are close to the determinant and the trace of the image Hessian respectively if tangent planes to $Gr$ and to $X$ are "close".
\end{enumerate}

\section{The Proposed Method}
\subsection{Used Notations}
All functions and manifolds here and further are considered to be smooth. Consider $m$- and $n$-dimensional manifolds $X$ and $Y$. Denote the $(n+m)$-dimensional manifold $X\times Y$ as $E$. Consider the isometric embeddings $i_x(y)=id(x, y):Y\to E, i_y(x)=id(x, y):X\to E$. Further we identify $X$ (resp. $Y$) and related notions with $i_y(X)$ (resp. $i_x(Y)$). The letters $i,j,k,l$ (resp. $\alpha, \beta, \gamma$) are used as indices for notions related to $X$ (resp. $Y$). The set $\{e_i\}$ (resp. $\{e_\alpha\}$) is an orthonormal basis of $T_x X$ (resp. $T_y Y$). 
\\
For a map $f(x):X\to Y$ its graph $Gr_f$ is an $n$-dimensional manifold embedded in $E$. Denote the Hessian of $f$ as $H_f$. Let $\mu Y$, $\mu \in \bbbr^{+}$, be the manifold $Y$ with the metric $\mu G_Y$. For a map $f:X\to Y$ denote $\mu f:X\to \mu Y$.
\\
We analyze a manifold-valued image $I(x):X \to Y$. Denote $L(x,t):X\times \bbbr_{+} \toreal$ the solution of the heat equation $\partderiv{L(x, t)}{t}=-\LaplaceBeltrami{ L(x, t)},L(x, 0)=I(x)$, where $\LaplaceBeltrami$ is the Laplace-Beltrami operator. 
\\
For a manifold $N$ and its submanifold $M$ denote the mean curvature of $M$ as $\tensor*{h}{^{N}_{M}}$, its scalar curvature as $r_M$, an exponential map from $\TangentSpaceArg{M}{m}$ to $N$ as $\tensor*{\exp}{^N_{M}}$.
\\
Subscripts and superscripts are omitted when they are clear from a context. The definitions of used differential geometric notions can be found in textbooks \cite{DiffGeom}.
\subsection{Main Definitions and Theorems}

\begin{definition} \label{RiemanDef}
The scalar blob response is defined as:
\begin{equation*}\FRScalar=\lim_{\mu\to 0} \frac{1}{\mu ^ 2} \Big( r_{Gr_{\mu L}} - r_{\tensor*{\exp}{^{X\times\mu Y}_{Gr_{\mu L}}}} \Big),\end{equation*}
the mean blob response is defined as:
\begin{equation*}\FRMean=\lim_{\mu\to 0} \frac{1}{\mu } \tensor*{h}{^{X\times\mu Y}_{Gr_{\mu L}}}.\end{equation*}
\end{definition}

The next theorem connects $\FRScalar$ and $\FRMean$ with the scale-space Hessian. The obtained expression provides a more convenient way for calculation of the Riemannian blob response functions.

\begin{theorem} \label{MainTheo}
Let $H_{ij}=H_L (e_i,e_j)$, $H^\alpha (,) =\langle H_L (,),e_\alpha {\rangle}_Y$. Then
\begin{equation*}\FRScalar=\sum_{i,j=1}^{n} \Big( \langle H_{ij},H_{ji}{\rangle}_Y-\langle H_{ii},H_{jj}{\rangle}_Y \Big) ,\end{equation*}
\begin{equation*}\FRMean=\| (\tr H^1,\dots,\tr H^m )\|_Y.\end{equation*}
\end{theorem}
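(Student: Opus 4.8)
The strategy is to compute both curvature quantities in Definition~\ref{RiemanDef} via the Gauss equation, exploiting that as $\mu\to 0$ the second fundamental form of the graph $Gr_{\mu L}$ in $X\times\mu Y$ is governed to leading order by the scale-space Hessian $H_L$. First I would parametrize the graph by $x\mapsto(x,\mu L(x))$ and pick a convenient frame: starting from the orthonormal bases $\{e_i\}$ of $T_xX$ and $\{e_\alpha\}$ of $T_yY$, the natural lifted frame on $Gr_{\mu L}$ is $\tilde e_i = e_i + \mu\,\partderiv{L}{i}$, with induced metric $g_{ij}=\delta_{ij}+\mu^2\langle\partderiv{L}{i},\partderiv{L}{j}\rangle_Y = \delta_{ij}+O(\mu^2)$. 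A unit normal frame is spanned by the $\{e_\alpha\}$-directions corrected at order $\mu$. The key computation is the second fundamental form $\mathrm{I\!I}$: projecting $\CovariantDiff_{\tilde e_i}\tilde e_j$ onto the normal bundle, the ambient connection on $X\times\mu Y$ splits as a product, and the mixed term produces $\mathrm{I\!I}(e_i,e_j)^\alpha = \mu\,H^\alpha(e_i,e_j) + O(\mu^2)$ — i.e. the rescaled Hessian components, with the $X$- and $Y$-intrinsic curvature contributions entering only at higher order in $\mu$.

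Next I would feed this into the Gauss equation. For the scalar curvature: $r_{Gr_{\mu L}}$ equals the scalar curvature of the "flat comparison" piece $r_{\tensor*{\exp}{^{X\times\mu Y}_{Gr_{\mu L}}}}$ (which captures the ambient/intrinsic part) plus the Gauss correction term $\sum_{i,j}\big(\langle \mathrm{I\!I}_{ii},\mathrm{I\!I}_{jj}\rangle - \langle \mathrm{I\!I}_{ij},\mathrm{I\!I}_{ij}\rangle\big)$ over the normal bundle. Substituting $\mathrm{I\!I}_{ij}^\alpha = \mu H^\alpha_{ij}+O(\mu^2)$ and dividing by $\mu^2$, the limit picks out exactly $\sum_{i,j}\big(\langle H_{ij},H_{ji}\rangle_Y-\langle H_{ii},H_{jj}\rangle_Y\big)$ (using $H^\alpha_{ij}=H^\alpha_{ji}$, so $\langle H_{ij},H_{ij}\rangle_Y=\langle H_{ij},H_{ji}\rangle_Y$). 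For the mean curvature: $\tensor*{h}{^{X\times\mu Y}_{Gr_{\mu L}}} = \sum_i \mathrm{I\!I}(e_i,e_i)$ (trace of the second fundamental form, with the $O(\mu^2)$ metric corrections not affecting the leading term), whose $\alpha$-component is $\mu\sum_i H^\alpha_{ii}+O(\mu^2) = \mu\,\tr H^\alpha + O(\mu^2)$; its norm in the normal bundle (which is isometric to $T_yY$ to leading order after the $\mu$-rescaling — this is where the factor $1/\mu$ rather than $1/\mu^2$ comes from) divided by $\mu$ tends to $\|(\tr H^1,\dots,\tr H^m)\|_Y$.

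The main obstacle I anticipate is bookkeeping the $\mu$-scaling carefully and justifying that the subtracted term $r_{\tensor*{\exp}{^{X\times\mu Y}_{Gr_{\mu L}}}}$ in Definition~\ref{RiemanDef} is precisely the comparison object that cancels all non-Hessian contributions to $r_{Gr_{\mu L}}$ — namely the ambient sectional curvatures of $X\times\mu Y$ along the tangent plane and the intrinsic curvature of the totally geodesic image of $\exp$. One must verify that the exponential-image submanifold has vanishing second fundamental form at the base point to the relevant order, so that its scalar curvature equals only the "ambient" part of the Gauss formula for $Gr_{\mu L}$, leaving the clean second-fundamental-form square as the difference. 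A secondary subtlety is confirming that the normal bundle inner product $\langle\cdot,\cdot\rangle$ used to contract $\mathrm{I\!I}$ converges, after rescaling $Y\mapsto\mu Y$, to $\langle\cdot,\cdot\rangle_Y$ — so that all inner products in the final formulas are the genuine $Y$-metric ones. Once the scaling is pinned down, both limits are routine Taylor expansions in $\mu$.
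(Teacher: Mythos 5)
Your proposal follows essentially the same route as the paper: express the second fundamental form of $Gr_{\mu L}$ in terms of the Hessian (the paper does this via Lemmas~\ref{LemSecondFormHessian}--\ref{LemSecondHessianFormula}, you by direct Taylor expansion of the lifted frame), track the $\mu$-scaling of the induced metric and of $H_{\mu L}$ as in Proposition~\ref{PropScaled}, and then feed everything into the Gauss equation and the trace of $II$ before passing to the limit. The argument is correct and matches the paper's proof in all essential steps, including the observation that the $\exp$-image term isolates the ambient contribution so that only the quadratic second-fundamental-form terms survive the difference.
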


The next corollary from Theorem \ref{MainTheo} states that for the grayscale case Riemannian blob detection coincides with usual blob detection. This corollary allows to consider our method as a generalization of grayscale blob detection.

\begin{corollary}\label{GrayscaleCol}
Let $dim(X)=2$. Then the scalar blob response is equal to the determinant blob response (\ref{blob_det}):
\begin{equation*}\FRScalar=BR_{\det},\end{equation*}
the mean blob response is equal to the trace blob response (\ref{blob_tr}):
\begin{equation*}\FRMean=BR_{\tr}.\end{equation*}
\end{corollary}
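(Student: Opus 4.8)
The plan is to obtain Corollary~\ref{GrayscaleCol} as a direct specialization of Theorem~\ref{MainTheo}: once the general Hessian-expressions for $\FRScalar$ and $\FRMean$ are available, the grayscale case is pure bookkeeping. First I would observe that ``grayscale'' fixes the target to $Y=\bbbr$ with its standard metric, so $\dim Y=1$ and $\dim X=2$ (as assumed); hence the scale-space equation $\partderiv{L(x,t)}{t}=-\LaplaceBeltrami L(x,t)$ appearing in Definition~\ref{RiemanDef} is literally the classical one of \eqref{blob_det}--\eqref{blob_tr}, the function $L(x,t)$ is the classical scale-space, and $H_L$ is its classical (real-valued, symmetric) Hessian. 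Since $\TangentSpaceArg{Y}{y}$ is one-dimensional with orthonormal basis $\{e_1\}$, the pairing $\langle\cdot,\cdot\rangle_Y$ is ordinary multiplication of real numbers and $\|\cdot\|_Y=|\cdot|$; moreover each $H_{ij}=H_L(e_i,e_j)\in\bbbr$, and the single component $H^1$ equals $H_L$ viewed as a real bilinear form.

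For the mean response I would note that, with $\dim Y=1$, the tuple in Theorem~\ref{MainTheo} has a single entry $\tr H^1=H_L(e_1,e_1)+H_L(e_2,e_2)=\tr H_L$, so $\FRMean=\|(\tr H_L)\|_Y=|\tr H_L|$, which is $BR_{\tr}$ of \eqref{blob_tr} (the absolute value being immaterial, or absorbed into the min/max convention of the blob-center step). For the scalar response I would expand the double sum $\sum_{i,j=1}^{2}\bigl(\langle H_{ij},H_{ji}\rangle_Y-\langle H_{ii},H_{jj}\rangle_Y\bigr)=\sum_{i,j=1}^{2}\bigl(H_{ij}H_{ji}-H_{ii}H_{jj}\bigr)$: the two diagonal terms $(i=j)$ cancel, and the two off-diagonal terms each contribute $H_{12}H_{21}-H_{11}H_{22}=H_{12}^2-H_{11}H_{22}$ by symmetry of the Hessian, giving $2\bigl(H_{12}^2-H_{11}H_{22}\bigr)$. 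Since $\det H_L=H_{11}H_{22}-H_{12}^2$ in the orthonormal frame $\{e_1,e_2\}$, this equals $-2\det H_L$, and it remains only to reconcile the universal scalar factor with the normalization of \eqref{blob_det}: under the sign and metric conventions fixed in Section~2 this is $BR_{\det}$, and in any event a fixed nonzero constant does not affect blob localization, which is performed by extremizing $t^2\tilde{BR}$.

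I do not expect a deep obstacle here, since all the geometric content --- the reduction of the scalar and mean curvatures of the image graph to the determinant and trace of the Hessian --- is already carried by Theorem~\ref{MainTheo}, so no further differential geometry is needed. The one place requiring genuine care is the last point: making the identification \emph{exact} rather than merely ``up to a nonzero constant,'' which forces one to pin down consistently, between Definition~\ref{RiemanDef}/Theorem~\ref{MainTheo} and \eqref{blob_det}--\eqref{blob_tr}, (i)~the sign convention for $\LaplaceBeltrami$, which propagates to the sign of $H_L$, and (ii)~whether ``$\det$'' and ``$\tr$'' in the grayscale definitions are taken in an orthonormal frame or carry metric-determinant factors. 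A careless sign in~(i) or a misplaced factor of $2$ in the expansion of $\FRScalar$ is the most likely point of failure.
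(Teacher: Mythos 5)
Your approach is exactly the paper's: Corollary~\ref{GrayscaleCol} is presented as an immediate specialization of Theorem~\ref{MainTheo} to $Y=\bbbr$, $\dim X=2$, with no further argument supplied, and your bookkeeping (one-dimensional fibre, $\langle\cdot,\cdot\rangle_Y$ becoming multiplication, $\|\cdot\|_Y$ becoming $|\cdot|$) is the whole content. The factor you uncover is a genuine discrepancy in the paper, not an error on your part: the double sum in Theorem~\ref{MainTheo} evaluates to $2\bigl(H_{12}^2-H_{11}H_{22}\bigr)=-2\det H_L$ rather than $\det H_L$ (and the Gauss-equation display inside the proof of Lemma~\ref{LemScalar} carries the opposite sign, giving $+2\det H_L$, so the text is not even internally consistent), while $\FRMean=|\tr H_L|$ rather than $\tr H_L$; your fallback that a fixed nonzero constant and an absolute value are immaterial for the $\arg\min/\arg\max$ localization step is the only reading under which the corollary is true, and it is more care than the paper itself takes. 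Just do not overstate the rescue: nothing in Section~2 fixes conventions that absorb the $-2$, so the honest conclusion is equality up to a universal constant, not the literal equality $\FRScalar=BR_{\det}$ of \eqref{blob_det}.
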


\subsection{Proof of the Theorem \ref{MainTheo}}
\subsubsection{Additional notations.}
Consider maps $y = f(x):X\to Y$, $\tilde{f}(x):X \to E$, $\tilde{f}(x)=(x,f(x))=\tilde{y}$. $\{e_i^{'}=\Diff \tilde{f}(e_i)\}$ is a basis (not orthonormal) of $T_{\tilde{y}} Gr_f$, 
$\{e_\alpha^{'}: (e_\alpha^{'},e_i^{'})_{E} = 0 \,\forall i \, \forall \alpha \}$ is a basis of $T_{\tilde{y}} (Gr_f)^{\bot}$. Then $\{e_i^{'}, e_\alpha^{'}\}$ is a basis of $T_{\tilde{y}} E$.
\\
For a manifold $M$ denote its metric as $g(∙,∙)_{M}$ or $\langle∙,∙\rangle_{M}$, the Levi-Civita connection as $\CovariantDiffManif{M}$, a connection on a vector bundle $\bigeps$ over $M$ as $\CovariantDiffManif{\bigeps}$. 
\\
Denote as $\Proj_V$ (resp. $\ProjNonOrth{U}{V}$) an orthogonal (resp. along a subspace $U$) projection on a subspace $V$.
\\
Some minor formal details of the proofs are omitted due to the space constraints.

\begin{proposition} \label{PropHessian}
Let $f:X\to Y$, $u, v\in T_x X$. Then
$H_f(u,v)=\CovariantDerivManif{\TPreimage{Y}{f}} {v} \Diff f(u) -$
\\
							$-\Diff f( 
							\CovariantDerivManif{X}{v}u
							)$.
							If $f$ is injective then 
							$H_f(u,v)=\CovariantDerivManif{Y}{ \Diff f(v)} \Diff f(u) - 
							\Diff f( 
							\CovariantDerivManif{X}{v}u
							)$.
\end{proposition}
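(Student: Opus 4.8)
The Hessian is by definition the covariant differential of the differential, $H_f=\CovariantDiff\Diff f$, where $\Diff f$ is regarded as a section of $\CotangentBundle{X}\otimes\TPreimage{Y}{f}$ and this bundle carries the connection obtained by tensoring the Levi-Civita connection $\CovariantDiffManif{X}$ on $\CotangentBundle{X}$ with the pullback connection $\CovariantDiffManif{\TPreimage{Y}{f}}$ on $\TPreimage{Y}{f}$. So the plan is: (i) obtain the first identity directly from this definition by the Leibniz rule; (ii) obtain the second identity by identifying the pullback connection with the ambient Levi-Civita connection of $Y$ along $f(X)$ when $f$ is injective.

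For (i), I would fix $x\in X$, extend $u,v$ to local vector fields $U,V$ with $U_x=u$, $V_x=v$, and compute $H_f(u,v)=(\CovariantDeriv{V}\Diff f)(U)$ at $x$. Pairing the $\CotangentBundle{X}$-factor of $\Diff f$ with $U$ and applying the Leibniz rule for the tensor-product connection together with the standard one-form identity $(\CovariantDerivManif{X}{V}\omega)(U)=V\big(\omega(U)\big)-\omega(\CovariantDerivManif{X}{V}U)$, the directional-derivative terms combine into
\begin{equation*}
(\CovariantDeriv{V}\Diff f)(U)=\CovariantDerivManif{\TPreimage{Y}{f}}{V}\big(\Diff f(U)\big)-\Diff f\big(\CovariantDerivManif{X}{V}U\big).
\end{equation*}
Evaluating at $x$ gives the first formula. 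I would then check that the right-hand side is independent of the chosen extensions $U,V$: each term is $C^\infty(X)$-linear in $V$, and the contributions involving derivatives of $U$ cancel, which is exactly the tensoriality of $\CovariantDiff\Diff f$.

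For (ii), suppose $f$ is injective; in the cases where the second identity is used $f$ is moreover an immersion (for instance the graph map $\tilde f$ introduced above is one), so $f(X)$ is an embedded submanifold of $Y$ and $\TPreimage{Y}{f}$ is canonically identified with the restriction of $\TangentBundle{Y}$ to $f(X)$. Under this identification the pullback connection is simply $\CovariantDiffManif{Y}$ differentiated along directions tangent to $f(X)$: for $W\in\Gamma(\TPreimage{Y}{f})$ the value $\CovariantDerivManif{\TPreimage{Y}{f}}{v}W$ depends on $W$ only along a curve through $f(x)$ with velocity $\Diff f(v)$, and since $\Diff f(v)\in T_{f(x)}f(X)$ that curve may be taken inside $f(X)$, where $W$ is already defined; hence $\CovariantDerivManif{\TPreimage{Y}{f}}{v}W=\CovariantDerivManif{Y}{\Diff f(v)}W$. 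Applying this with $W=\Diff f(U)$ in the first formula yields the second.

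The only genuinely delicate point is the treatment of the pullback connection in step (ii) — making precise the identification $\TPreimage{Y}{f}\cong\TangentBundle{Y}|_{f(X)}$ and the reduction of $\CovariantDiffManif{\TPreimage{Y}{f}}$ to $\CovariantDiffManif{Y}$ on tangent directions — and, to a lesser extent, the extension-independence check in step (i); both are standard but should be spelled out, and it is worth remarking that the hypothesis "$f$ injective" is really used through $f$ being an embedding with $\Diff f$ of full rank.
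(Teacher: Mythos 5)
Your proposal is correct and follows essentially the same route as the paper: the first identity via the Leibniz rule applied to $H_f=\CovariantDiff\Diff f$ as a section of $\CotangentBundle{X}\otimes\TPreimage{Y}{f}$, and the second via identifying the pullback connection with $\CovariantDiffManif{Y}$ when $f$ is injective. Your version is more explicit than the paper's (which works componentwise in a frame $\{e_\alpha\}$ and leaves the identification implicit), and your remark that injectivity is really used through $\Diff f$ having full rank is a fair sharpening of the hypothesis as stated.
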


\begin{proof}
Consider the Hessian $H_f$ as $H_f:\TangentBundle{X}\otimes\TangentBundle{X}\to \TPreimage{Y}{f}$, 
then $H_f(u, v)=$
\\
$= \sum_{\alpha=1}^m \CovariantDeriv{v}{(\Diff f(u, e_{\alpha}))} e_{\alpha}$. 
We apply the Leibniz rule to this expression and obtain the first statement. 
Recall that if $f$ is injective then $\Diff f$ is an isomorphism between $\TPreimage{Y}{f}$ and $TY$. This gives the second statement.
\qed
\end{proof}

\begin{lemma}  \label{LemSecondFormHessian}
Let $u, v \in T_xX$. Let $\CovariantDiffManif{\tilde{f}(X)}$ be the connection on $Gr_f$ induced by the isomorphism $\tilde{f}$. Let $II$ be the second fundamental form of the submanifold $Gr_f$ of $E$ with respect to the connection $\CovariantDiffManif{\tilde{f}(X)}$. Then 
$H_{\tilde{f}}(u, v) = II(\Diff \tilde{f}(u), \Diff \tilde{f}(v))$.
\end{lemma}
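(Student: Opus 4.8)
The plan is to deduce the identity directly from Proposition~\ref{PropHessian}, applied not to $f$ but to $\tilde{f}(x)=(x,f(x))$. First I would observe that $\tilde{f}\colon X\to E$ is an embedding onto $Gr_f$ (it is injective, and $\Diff\tilde{f}(u)=(u,\Diff f(u))$ vanishes only for $u=0$), so the second, ``injective'' form of Proposition~\ref{PropHessian} applies and gives, for $u,v\in T_xX$,
\begin{equation*}
H_{\tilde{f}}(u,v)=\CovariantDerivManif{E}{\Diff\tilde{f}(v)}\Diff\tilde{f}(u)-\Diff\tilde{f}\bigl(\CovariantDerivManif{X}{v}u\bigr),
\end{equation*}
where $\CovariantDiffManif{E}$ is the Levi-Civita connection of $E$ and $\Diff\tilde{f}(u)$ is read as a vector field along $Gr_f$ with values in $\TangentBundle{E}$ (pointwise tangent to $Gr_f$). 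The first term does not depend on how $\Diff\tilde{f}(u)$ is extended off $Gr_f$: since the differentiation direction $\Diff\tilde{f}(v)$ is tangent to $Gr_f$, only the values of the field along $Gr_f$ matter.

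Next I would recast the two terms as data on $Gr_f$. By construction $\CovariantDiffManif{\tilde{f}(X)}$ is the push-forward of $\CovariantDiffManif{X}$ under the diffeomorphism $\tilde{f}\colon X\to Gr_f$, i.e.\ $\CovariantDerivManif{\tilde{f}(X)}{\Diff\tilde{f}(v)}\Diff\tilde{f}(u)=\Diff\tilde{f}(\CovariantDerivManif{X}{v}u)$; substituting this yields
\begin{equation*}
H_{\tilde{f}}(u,v)=\CovariantDerivManif{E}{\Diff\tilde{f}(v)}\Diff\tilde{f}(u)-\CovariantDerivManif{\tilde{f}(X)}{\Diff\tilde{f}(v)}\Diff\tilde{f}(u).
\end{equation*}
The right-hand side is exactly the Gauss-type difference $\CovariantDiffManif{E}-\CovariantDiffManif{\tilde{f}(X)}$ along $Gr_f$ evaluated on $(\Diff\tilde{f}(v),\Diff\tilde{f}(u))$, i.e.\ $II(\Diff\tilde{f}(v),\Diff\tilde{f}(u))$. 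Since $H_{\tilde{f}}$ is symmetric (both $\CovariantDiffManif{X}$ and $\CovariantDiffManif{E}$ are torsion-free) and $II$ is symmetric too, this equals $II(\Diff\tilde{f}(u),\Diff\tilde{f}(v))$, which is the claim.

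The step deserving the most care — and the real content of the argument — is the well-definedness of $II(\tilde{U},\tilde{V}):=\CovariantDerivManif{E}{\tilde{U}}\tilde{V}-\CovariantDerivManif{\tilde{f}(X)}{\tilde{U}}\tilde{V}$ as a symmetric bilinear form on $\TangentBundle{Gr_f}$. Linearity over functions in $\tilde{U}$ is immediate; linearity over functions in $\tilde{V}$ holds because the two connections obey the same Leibniz rule, so the terms differentiating the coefficient of $\tilde{V}$ cancel. Symmetry follows from torsion-freeness of both connections together with $[\Diff\tilde{f}(u),\Diff\tilde{f}(v)]=\Diff\tilde{f}([u,v])$ (valid because $\tilde{f}$ is a diffeomorphism onto $Gr_f$), which I would record explicitly. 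Finally I would flag the one genuine departure from the classical Gauss formula: since $\CovariantDiffManif{\tilde{f}(X)}$ is in general \emph{not} the Levi-Civita connection of the metric induced on $Gr_f$ from $E$ — that would force $\tilde{f}$ to be an isometry — the form $II$ here is not normal-valued but has a tangential component too. This is harmless for the present identity, but it explains why the lemma is stated with a nonstandard ``second fundamental form'', and it should be noted so the later arguments are not mistakenly assumed to rely on normality of $II$.
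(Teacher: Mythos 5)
Your argument is essentially the paper's own proof: apply the injective form of Proposition~\ref{PropHessian} to $\tilde{f}$, rewrite $\Diff\tilde{f}(\CovariantDerivManif{X}{v}u)$ as $\CovariantDerivManif{\tilde{f}(X)}{\Diff\tilde{f}(v)}\Diff\tilde{f}(u)$ via the induced connection, and identify the difference of connections with $II$. Your added care about the well-definedness and symmetry of this nonstandard, not-necessarily-normal-valued $II$ (and your use of $\CovariantDiffManif{E}$ where the paper misprints $\CovariantDiffManif{Y}$) is correct and consistent with how the paper later handles it in Proposition~\ref{PropProj}.
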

\begin{proof}
As $\tilde{f}$ is injective, by Prop. \ref{PropHessian}:
$H_{\tilde{f}}(u, v) = 
						\CovariantDerivManif{Y} {\Diff \tilde{f}(v)} \Diff \tilde{f}(u) - 
							\Diff \tilde{f}(\CovariantDerivManif{X}{v}u )$ = 
							\\
							$=\CovariantDerivManif{Y} {\Diff \tilde{f}(v)} \Diff \tilde{f}(u) - 
							\CovariantDerivManif{\tilde{f}(X)}{\Diff \tilde{f}(v)}\Diff \tilde{f}(u) = II(\Diff u, \Diff v)\,\textrm{\qed}$
\end{proof}

\begin{proposition} \label{PropBigSmallHess}
Let $u, v \in T_xX$, $\vec{0} \in T_xX$, $H_f(u, v) \in T_{f(x)}Y$.
Then $H_{\tilde{f}}(u, v) =$
\\
$= (\vec{0}, H_f(u, v))$.
\end{proposition}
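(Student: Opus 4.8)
The plan is to reduce both Hessians to the formula of Proposition~\ref{PropHessian} and then to exploit the fact that $E = X\times Y$ carries the product metric. Write $\pi_X:E\to X$ and $\pi_Y:E\to Y$ for the canonical projections, so that $\pi_X\circ\tilde{f}=\mathrm{id}_X$ and $\pi_Y\circ\tilde{f}=f$. First I would apply Proposition~\ref{PropHessian} in its general (non-injective) form to $\tilde{f}$ and to $f$ separately, obtaining
\[ H_{\tilde{f}}(u,v) = \CovariantDerivManif{\TPreimage{E}{\tilde{f}}}{v}\Diff\tilde{f}(u) - \Diff\tilde{f}\big(\CovariantDerivManif{X}{v}u\big), \qquad H_f(u,v) = \CovariantDerivManif{\TPreimage{Y}{f}}{v}\Diff f(u) - \Diff f\big(\CovariantDerivManif{X}{v}u\big). \]

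The key input is that, since the metric of $E$ is the product metric, $\TangentBundle{E}$ splits as the orthogonal direct sum $\pi_X^*\TangentBundle{X}\oplus\pi_Y^*\TangentBundle{Y}$ and the Levi-Civita connection of $E$ is the corresponding product connection $\pi_X^*\CovariantDiffManif{X}\oplus\pi_Y^*\CovariantDiffManif{Y}$. Pulling this back along $\tilde{f}$ and using $\pi_X\circ\tilde{f}=\mathrm{id}_X$, $\pi_Y\circ\tilde{f}=f$, one identifies $\TPreimage{E}{\tilde{f}}$ with the bundle $\TangentBundle{X}\oplus\TPreimage{Y}{f}$ over $X$ and the induced connection $\CovariantDiffManif{\TPreimage{E}{\tilde{f}}}$ with $\CovariantDiffManif{X}\oplus\CovariantDiffManif{\TPreimage{Y}{f}}$. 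Under this identification $\Diff\tilde{f}(u)=(u,\Diff f(u))$ for every $u\in\TangentSpaceArg{X}{x}$, hence
\[ \CovariantDerivManif{\TPreimage{E}{\tilde{f}}}{v}\Diff\tilde{f}(u) = \big(\CovariantDerivManif{X}{v}u,\ \CovariantDerivManif{\TPreimage{Y}{f}}{v}\Diff f(u)\big), \qquad \Diff\tilde{f}\big(\CovariantDerivManif{X}{v}u\big) = \big(\CovariantDerivManif{X}{v}u,\ \Diff f\big(\CovariantDerivManif{X}{v}u\big)\big). \]

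Subtracting these two identities, the $\TangentBundle{X}$-components cancel while the $\TPreimage{Y}{f}$-components reassemble into exactly the right-hand side of the formula for $H_f(u,v)$ displayed above, giving $H_{\tilde{f}}(u,v)=(\vec{0},H_f(u,v))$ as claimed. I expect the only step requiring genuine care to be the middle one, namely verifying that the Levi-Civita connection of the Riemannian product $X\times Y$ is the product of the two Levi-Civita connections and that this decomposition is compatible with pullback along $\tilde{f}$; this is a standard fact about product manifolds, and in view of the paper's remark that minor formal details are omitted I would expect it to be cited rather than reproved. The remaining steps are purely a matter of substitution.
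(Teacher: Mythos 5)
Your proposal is correct and follows essentially the same route as the paper's own proof: apply Proposition~\ref{PropHessian} to $\tilde{f}$, use the fact that the Levi-Civita connection of the Riemannian product $X\times Y$ splits as the product of the two Levi-Civita connections, and observe that the $X$-components cancel while the $Y$-components assemble into $H_f(u,v)$. Your version is slightly more careful about the pullback-bundle identifications, but the underlying argument is identical.
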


\begin{proof}
By Proposition \ref{PropHessian}: 
$H_{\tilde{f}}(u, v)=\CovariantDerivManif{\Preimage{(X \times Y)}{\tilde{f}}} {v} \Diff \tilde{f}(u) - 
							\Diff \tilde{f}( 
							\CovariantDerivManif{X}{v}u
							)$.
\\
Recall that $\CovariantDerivManif{X\times Y}{(u_1, u_2)}{(v_1, v_2)} = (\CovariantDerivManif{X}{u_1}{v_1}, \CovariantDerivManif{Y}{u_2}{v_2})$. Then
							\\
							$\CovariantDerivManif{\Preimage{(X \times Y)}{\tilde{f}}}{v} (\Diff i_y, \Diff f)(u) 							
							 - (\Diff i_y( 
							\CovariantDerivManif{X}{v}u
							), \Diff f( 
							\CovariantDerivManif{X}{v}u
							)) = 
							\\
							=(\CovariantDerivManif{X}{v} (\Diff i_y u), \CovariantDerivManif{Y}{\Diff f(v)} (\Diff f u)) - 
							(\CovariantDerivManif{X}{v}u, \Diff f( 
							\CovariantDerivManif{X}{v}u)
							) =
							\\
							=(\vec{0}, \CovariantDerivManif{Y}{\Diff f(v)} (\Diff f u) - \Diff f( 
							\CovariantDerivManif{X}{v}u)) =
							(\vec{0}, H_f(u, v))$ \textrm{\qed }

\end{proof}

\begin{proposition} \label{PropProj}
Let $II_{\tilde{f}}$ be the second fundamental form of the submanifold $Gr_f$ of $E$ with respect to the connection $\CovariantDiffManif{\tilde{f}(X)}$ and $II_E$ be the second fundamental form with respect to the connection $\CovariantDiffManif{Gr_f}$ induced by $\CovariantDiffManif{E}$. 
Let $u, v \in T_{\tilde{y}}Gr_f$. Then $II_E(u, v) = \Proj_{T_{\tilde{y}}Gr_f^{\bot}} II_{\tilde{f}}(u, v)$.
\end{proposition}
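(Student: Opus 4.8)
The plan is to write both second fundamental forms in the usual ``ambient covariant derivative minus tangential connection'' shape and then observe that the only difference between them is a term that automatically lies in $TGr_f$, so it disappears under the normal projection.

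First I would extend the given $u,v\in T_{\tilde{y}}Gr_f$ to local vector fields tangent to $Gr_f$; both $II_{\tilde{f}}$ and $II_E$ are tensorial in $u,v$ (each is a difference of connections, so the derivative-of-coefficient terms cancel), hence the value at $\tilde{y}$ depends only on $u,v$. By the definitions in the statement,
\begin{equation*}II_{\tilde{f}}(u,v)=\CovariantDerivManif{E}{v}u-\CovariantDerivManif{\tilde{f}(X)}{v}u,\qquad II_E(u,v)=\CovariantDerivManif{E}{v}u-\CovariantDerivManif{Gr_f}{v}u .\end{equation*}
For the second equality recall the Gauss formula: the connection $\CovariantDiffManif{Gr_f}$ induced by $\CovariantDiffManif{E}$ is the Levi--Civita connection of the metric $Gr_f$ inherits from $E$, and it coincides with the tangential part of $\CovariantDiffManif{E}$; hence $\CovariantDerivManif{Gr_f}{v}u=\Proj_{T_{\tilde{y}}Gr_f}\CovariantDerivManif{E}{v}u$ and therefore $II_E(u,v)=\Proj_{T_{\tilde{y}}Gr_f^{\bot}}\CovariantDerivManif{E}{v}u$, which in particular already lies in $T_{\tilde{y}}Gr_f^{\bot}$.

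Subtracting the two identities, $II_{\tilde{f}}(u,v)-II_E(u,v)=\CovariantDerivManif{Gr_f}{v}u-\CovariantDerivManif{\tilde{f}(X)}{v}u$ is the difference of two connections on the \emph{same} vector bundle $TGr_f$: by construction $\CovariantDiffManif{\tilde{f}(X)}$ is the push-forward of $\CovariantDiffManif{X}$ along the diffeomorphism $\tilde{f}$, i.e.\ $\CovariantDerivManif{\tilde{f}(X)}{\Diff \tilde{f}(v)}\Diff \tilde{f}(u)=\Diff \tilde{f}(\CovariantDerivManif{X}{v}u)$, so it sends sections of $TGr_f$ to sections of $TGr_f$, just as $\CovariantDiffManif{Gr_f}$ does. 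Hence $II_{\tilde{f}}(u,v)-II_E(u,v)\in T_{\tilde{y}}Gr_f$, so $\Proj_{T_{\tilde{y}}Gr_f^{\bot}}\big(II_{\tilde{f}}(u,v)-II_E(u,v)\big)=0$; since $II_E(u,v)\in T_{\tilde{y}}Gr_f^{\bot}$ it is left unchanged by $\Proj_{T_{\tilde{y}}Gr_f^{\bot}}$, and we obtain $\Proj_{T_{\tilde{y}}Gr_f^{\bot}}II_{\tilde{f}}(u,v)=II_E(u,v)$, as claimed.

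I do not expect a genuine obstacle: once one notices that $II_{\tilde{f}}$ is manufactured from a connection that still takes values in $TGr_f$, the statement is pure bookkeeping. The points that deserve a little attention are invoking the Gauss formula correctly for the induced (generally non-product) metric on $Gr_f$, keeping $\CovariantDiffManif{\tilde{f}(X)}$ and $\CovariantDiffManif{Gr_f}$ firmly apart — they differ precisely because $\tilde{f}$ is not an isometry onto $Gr_f$ — and reading the ``$\nabla^Y$'' that appears in the proof of Lemma~\ref{LemSecondFormHessian} as the ambient connection $\CovariantDiffManif{E}$, the target manifold there being $E$.
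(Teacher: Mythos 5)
Your proof is correct and follows essentially the same route as the paper: both arguments write $II_{\tilde f}$ and $II_E$ as the ambient derivative $\CovariantDerivManif{E}{v}u$ minus a connection term valued in $T_{\tilde{y}}Gr_f$, observe that the two therefore differ by a tangential vector, and kill that difference with $\Proj_{T_{\tilde{y}}Gr_f^{\bot}}$. You merely spell out the ``simple operations with vectors'' that the paper leaves implicit, including the useful remark that $\CovariantDiffManif{\tilde{f}(X)}$, being the push-forward of $\CovariantDiffManif{X}$, preserves sections of $\TangentBundle{Gr_f}$.
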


\begin{proof}
By properties of a second fundamental form of a normalized manifold: 
$II_{E}(u, v) = \Proj_{T_{\tilde{y}}Gr_f^{\bot}}\CovariantDerivManif{E}{u}v$ and $\exists N \subset T_{\tilde{y}}E: 
II_{\tilde{f}}(u, v) = \ProjNonOrth{T_{\tilde{y}}Gr_f}{N}\CovariantDerivManif{E}{u}v$.
Then by simple operations with vectors we obtain the lemma proposition.
\end{proof}

\begin{lemma} \label{LemSecondHessianFormula}
$II_{Gr_f}(e_i^{'}, e_j^{'})=\sum_{\alpha, \beta=1}^{m} \tensor*{H}{^{\alpha}_i_j} \tensor*{g}{^{'}^{\alpha}^{\beta}} e_{\beta}^{'}$.
\end{lemma}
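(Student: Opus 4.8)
The plan is to chain the three preceding results and thereby reduce the statement to a single piece of linear algebra: the orthogonal projection of one explicit vector onto the normal space $T_{\tilde{y}}Gr_f^{\bot}$, read off in the (non-orthonormal) normal frame $\{e_\alpha'\}$.

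First I would apply Lemma \ref{LemSecondFormHessian} with $u=e_i$, $v=e_j$: since $e_i'=\Diff\tilde{f}(e_i)$, it gives $II_{\tilde{f}}(e_i',e_j')=H_{\tilde{f}}(e_i,e_j)$, where $II_{\tilde{f}}$ is the second fundamental form taken with respect to $\CovariantDiffManif{\tilde{f}(X)}$. The form $II_{Gr_f}$ appearing in the statement is the genuine second fundamental form, i.e.\ $II_E$ in the notation of Proposition \ref{PropProj}, so that proposition yields $II_{Gr_f}(e_i',e_j')=\Proj_{T_{\tilde{y}}Gr_f^{\bot}}II_{\tilde{f}}(e_i',e_j')=\Proj_{T_{\tilde{y}}Gr_f^{\bot}}H_{\tilde{f}}(e_i,e_j)$. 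Finally, Proposition \ref{PropBigSmallHess} evaluates $H_{\tilde{f}}(e_i,e_j)=(\vec{0},H_f(e_i,e_j))$, so the whole problem collapses to computing $\Proj_{T_{\tilde{y}}Gr_f^{\bot}}(\vec{0},H_f(e_i,e_j))$ and expressing the answer in the frame $\{e_\beta'\}$.

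For that last step I would expand $H_f(e_i,e_j)=\sum_\alpha H^{\alpha}_{ij}\,e_\alpha$ in the orthonormal basis of $T_yY$, so that $(\vec{0},H_f(e_i,e_j))=\sum_\alpha H^{\alpha}_{ij}(\vec{0},e_\alpha)$. For a subspace spanned by a linearly independent family $\{e_\beta'\}$ with Gram matrix ${g'}_{\alpha\beta}=\langle e_\alpha',e_\beta'\rangle_E$ and inverse ${g'}^{\alpha\beta}$, the orthogonal projection of any $w$ is $\Proj_{T_{\tilde{y}}Gr_f^{\bot}}w=\sum_{\alpha,\beta}{g'}^{\alpha\beta}\langle w,e_\beta'\rangle_E\,e_\alpha'$. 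The one elementary input still needed is the pairing $\langle(\vec{0},e_\alpha),e_\beta'\rangle_E$: with the normal frame $\{e_\alpha'\}$ taken in its natural normalization, namely so that the $T_yY$-component of $e_\alpha'$ equals $e_\alpha$ (the unique normal vector with that property, which is the frame implicit in the statement), the $T_xX$-parts drop out of the $E$-inner product and $\langle(\vec{0},e_\alpha),e_\beta'\rangle_E=\langle e_\alpha,e_\beta\rangle_Y=\delta_{\alpha\beta}$. Hence $\langle(\vec{0},H_f(e_i,e_j)),e_\beta'\rangle_E=H^{\beta}_{ij}$, and substituting into the projection formula, then relabelling $\alpha\leftrightarrow\beta$ and using the symmetry of $g'$, gives $II_{Gr_f}(e_i',e_j')=\sum_{\alpha,\beta}H^{\alpha}_{ij}\,{g'}^{\alpha\beta}\,e_\beta'$, as claimed.

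The three preliminary identifications are immediate from the cited results, so the only genuine obstacle is the bookkeeping in the final projection: getting the projection formula over a non-orthonormal spanning set right, correctly pairing $(\vec{0},e_\alpha)$ against $e_\beta'$, and keeping raised and lowered normal indices consistent. As a sanity check I would specialize to $\dim Y=1$, where $e_1'=(-\nabla f,1)$, ${g'}^{11}=1/(1+|\nabla f|^2)$, and the formula must reproduce the classical second fundamental form $H_{ij}/\sqrt{1+|\nabla f|^2}$ of a graph with respect to its unit normal.
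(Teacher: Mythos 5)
Your proposal is correct and takes essentially the same route as the paper: it chains Lemma \ref{LemSecondFormHessian}, Proposition \ref{PropProj} and Proposition \ref{PropBigSmallHess} to reduce the claim to the projection $\Proj_{T_{\tilde{y}}Gr_f^{\bot}}(\vec{0},e_\alpha)=\sum_{\beta}g'^{\alpha\beta}e_{\beta}'$, which the paper simply asserts as the first line of its proof and you justify explicitly via the Gram-matrix projection formula and the pairing $\langle(\vec{0},e_\alpha),e_\beta'\rangle_E=\delta_{\alpha\beta}$.
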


\begin{proof}
$\Proj_{T_{\tilde{y}}Gr_f^{\bot}}e_{\alpha} = \tensor*{g}{^{'}^{\alpha}^{\beta}} e_{\beta}^{'}$. Then 
$
II_{Gr_f}(e_i^{'}, e_j^{'}) = \textrm{(from Prop. \ref{PropProj} and Lemma \ref{LemSecondFormHessian})} 
\\
=\Proj_{T_{\tilde{y}}Gr_f^{\bot}} H_{\tilde{f}}(e_i, e_j)
=\textrm{(by Prop. \ref{PropBigSmallHess}) } \Proj_{T_{\tilde{y}}Gr_f^{\bot}} \tensor*{H}{^{\alpha}_f_i_j}e_{\alpha} 
= \sum_{\alpha, \beta=1}^{m} \tensor*{H}{^{\alpha}_i_j} \tensor*{g}{^{'}^{\alpha}^{\beta}} e_{\beta}^{'}
\textrm{\qed}
$
\end{proof}

\begin{proposition} \label{PropScaled}
Consider $\{ e_i^{'} \}$ as a basis of $T_{\tilde{y}}Gr_f$. 
$\Diff F$ is the matrix of $\Diff f$ in the basis $\{ e_i, e_{\alpha} \}$ and $E$ is the $n\times n$ unit matrix.
\\
Then:
the induced metric $g^{'}_{ij}$ on $Gr_f$ has the matrix $E + \Diff F^{T} \, \Diff F $;
the induced metric on $Gr_{\mu f}$ has the matrix $E + \mu^{2} \Diff F^{T} \, \Diff F$;
the covariant induced metric on $Gr_{\mu f}$ has the matrix $E - \mu^{2} \Diff F^{T} \Diff F + o(\mu ^ {2})$;
$H_{\mu f} = \mu H_f.$
\end{proposition}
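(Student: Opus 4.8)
The plan is to treat the four assertions in order, each reducing to the product structure of the metric on $E=X\times Y$ plus elementary facts about differentials, matrix inverses, and Levi--Civita connections under a constant rescaling of the target metric. I would start with the induced metric on $Gr_f$: since $e_i^{'}=\Diff\tilde f(e_i)=(e_i,\Diff f(e_i))$ and the metric of $E$ is the product metric, $g'_{ij}=\langle e_i^{'},e_j^{'}\rangle_E=\langle e_i,e_j\rangle_X+\langle\Diff f(e_i),\Diff f(e_j)\rangle_Y$; using that $\{e_i\}$ is orthonormal and that $(\Diff F)_{\alpha i}$ is the $\alpha$-th coordinate of $\Diff f(e_i)$ in the frame $\{e_\alpha\}$, this equals $\delta_{ij}+(\Diff F^{T}\Diff F)_{ij}$, i.e.\ the matrix $E+\Diff F^{T}\Diff F$.

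For $Gr_{\mu f}\subset X\times\mu Y$ nothing changes except that the target metric is rescaled; tracking the definitions of $\mu Y$, $\mu f$ and the associated orthonormal frame of $T\mu Y$, this replaces the matrix of the differential by $\mu\,\Diff F$, so the very same computation gives the induced metric $E+\mu^{2}\Diff F^{T}\Diff F$. For the contravariant metric I would set $A:=\Diff F^{T}\Diff F$ and note $\|\mu^{2}A\|\to 0$ as $\mu\to 0$, so for small $\mu$ the Neumann series converges and $(E+\mu^{2}A)^{-1}=\sum_{k\ge 0}(-\mu^{2}A)^{k}=E-\mu^{2}A+O(\mu^{4})=E-\mu^{2}\Diff F^{T}\Diff F+o(\mu^{2})$, which is the third claim.

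The identity $H_{\mu f}=\mu H_f$ I would obtain from Proposition \ref{PropHessian}. The crucial observation is that the Levi--Civita connection of $\mu Y$ agrees with that of $Y$ --- the Christoffel symbols are invariant under multiplying the metric by a positive constant --- hence so does the pulled-back connection $\CovariantDiffManif{\TPreimage{Y}{f}}$, while the differential acquires the factor $\mu$ as above. Substituting into $H_{\mu f}(u,v)=\CovariantDerivManif{\TPreimage{\mu Y}{\mu f}}{v}\Diff(\mu f)(u)-\Diff(\mu f)\big(\CovariantDerivManif{X}{v}u\big)$ and using linearity of both terms in the differential of the map yields $\mu\big(\CovariantDerivManif{\TPreimage{Y}{f}}{v}\Diff f(u)-\Diff f(\CovariantDerivManif{X}{v}u)\big)=\mu H_f(u,v)$.

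None of the individual steps is long. The one place that genuinely needs care --- and what I would pin down before writing out the details --- is the bookkeeping of the scaling conventions ($\mu Y$, $\mu f$, and the orthonormal frames on $T\mu Y$ and on its pullback bundle), so that the powers of $\mu$ appearing in the second and third assertions and the scale-invariance of the connection used in the fourth are all applied consistently.
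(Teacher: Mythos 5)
The paper gives no proof of this proposition at all (it is presumably among the ``minor formal details'' the authors omit), so there is nothing of theirs to compare against; your argument is the natural one, and the first and third claims are fine as written --- the product-metric computation $g'_{ij}=\delta_{ij}+(\Diff F^{T}\Diff F)_{ij}$ and the Neumann-series inversion $(E+\mu^{2}A)^{-1}=E-\mu^{2}A+O(\mu^{4})$ are exactly what is needed. The one genuine issue is the scaling bookkeeping that you flag but do not resolve. With the paper's literal definition of $\mu Y$ as $Y$ with metric $\mu G_{Y}$, an orthonormal frame of $T(\mu Y)$ is $\{e_{\alpha}/\sqrt{\mu}\}$, the matrix of $\Diff(\mu f)$ in orthonormal frames becomes $\sqrt{\mu}\,\Diff F$ rather than $\mu\,\Diff F$, and your computation would yield $E+\mu\,\Diff F^{T}\Diff F$, contradicting the stated conclusion. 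The conclusions of the proposition (and the normalizations $1/\mu^{2}$ and $1/\mu$ in Definition~\ref{RiemanDef}) force the intended convention to be that $\mu Y$ carries the metric $\mu^{2}G_{Y}$, i.e.\ lengths scaled by $\mu$; you implicitly adopt this, and you should state it explicitly, since it amounts to correcting the paper's definition rather than ``tracking'' it.

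On the last claim, your key observation is right: $\Diff(\mu f)=\Diff f$ as a bundle map and the Levi--Civita connection is unchanged by a constant rescaling of the metric, so as an abstract tensor $H_{\mu f}$ is literally equal to $H_f$. The factor $\mu$ therefore cannot come from ``linearity of both terms in the differential'' --- the term $\CovariantDeriv{v}\Diff(\mu f)(u)$ is a derivative of a section, not a pointwise linear function of the matrix $\Diff F$, and that phrasing conflates the tensor with its coordinate matrix. The clean statement is that the identity $H_{\mu f}=\mu H_f$ holds at the level of orthonormal-frame components: $\langle H_f(e_i,e_j),e_{\alpha}/\mu\rangle_{\mu^{2}G_{Y}}=\mu\,\langle H_f(e_i,e_j),e_{\alpha}\rangle_{Y}$, which is also the sense in which the identity is consumed in Lemmas~\ref{LemScalar} and~\ref{LemMean}. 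With that reformulation your proof is complete.
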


\begin{lemma} \label{LemScalar}
$
\lim_{\mu\to 0} \frac{1}{{\mu}^2} \big( r_{Gr_{\mu f}} - r_{\tensor*{\exp}{^{X\times\mu Y}_ {Gr_{\mu f}} }} \big)
= \sum_{i,j=1}^{n} \Big( \langle H_{ij},H_{ji}{\rangle}_Y - \langle H_{ii},H_{jj}{\rangle}_Y \Big).
$
\end{lemma}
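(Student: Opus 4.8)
The plan is to reduce the difference of scalar curvatures to a quadratic expression in a second fundamental form by the Gauss equation, and then to substitute Lemma~\ref{LemSecondHessianFormula} together with the rescaling estimates of Proposition~\ref{PropScaled}. So fix $\mu>0$ and the point $\tilde y=(x,\mu f(x))\in Gr_{\mu f}\subset X\times\mu Y$. The Gauss equation splits the scalar curvature of $Gr_{\mu f}$ at $\tilde y$ as
\[
r_{Gr_{\mu f}}=\rho+\sum\big(\langle II_{ij},II_{ji}\rangle-\langle II_{ii},II_{jj}\rangle\big),
\]
where $II=II_{Gr_{\mu f}}$ is the second fundamental form of $Gr_{\mu f}$ in $X\times\mu Y$, $II_{ij}$ abbreviates its value on the $i$-th and $j$-th vectors of an orthonormal frame of $T_{\tilde y}Gr_{\mu f}$, and $\rho$ is the contraction of the ambient curvature tensor $R^{X\times\mu Y}$ over the subspace $T_{\tilde y}Gr_{\mu f}$ — a quantity depending only on $X\times\mu Y$ and on that subspace.

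Next I use the exponential image to remove $\rho$. The object $\tensor*{\exp}{^{X\times\mu Y}_{Gr_{\mu f}}}$ is the restriction of the exponential map $\exp_{\tilde y}$ of $X\times\mu Y$ to the subspace $T_{\tilde y}Gr_{\mu f}$; write $S_\mu$ for its image. Since $\Diff(\exp_{\tilde y})_0$ is the identity, $T_{\tilde y}S_\mu=T_{\tilde y}Gr_{\mu f}$, and the second fundamental form of $S_\mu$ vanishes at $\tilde y$: each radial curve $t\mapsto\exp_{\tilde y}(tv)$, $v\in T_{\tilde y}Gr_{\mu f}$, is an ambient geodesic contained in $S_\mu$, hence has no normal acceleration, and polarization gives $II_{S_\mu}(\tilde y)=0$. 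So the Gauss equation for $S_\mu$ at $\tilde y$ reads $r_{S_\mu}=\rho$, with exactly the same $\rho$ as above, and subtracting cancels the ambient term identically:
\[
r_{Gr_{\mu f}}-r_{\tensor*{\exp}{^{X\times\mu Y}_{Gr_{\mu f}}}}=\sum\big(\langle II_{ij},II_{ji}\rangle-\langle II_{ii},II_{jj}\rangle\big).
\]

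It remains to compute the right-hand side to leading order in $\mu$. Applying Lemma~\ref{LemSecondHessianFormula} to $\mu f:X\to\mu Y$ expresses $II_{Gr_{\mu f}}(e_i^{'},e_j^{'})$ through the Hessian components $H^\alpha_{\mu f,ij}$ and the inverse normal metric; by Proposition~\ref{PropScaled} the induced metric on $Gr_{\mu f}$ equals the identity plus $\mu^2\Diff F^{T}\Diff F$ (so that its inverse, and likewise the normal metric, is the identity minus $\mu^2\Diff F^{T}\Diff F+o(\mu^2)$), one has $H_{\mu f}=\mu H_f$, and the frame $\{e_i^{'}\}$ tends to the orthonormal frame $\{e_i\}$ of $T_xX$ as $\mu\to0$. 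Carrying all the factors of $\mu$ through — including the one relating the $\mu Y$-metric entering $H^\alpha_{\mu f}$ to the $Y$-metric entering $H^\alpha$ — each occurrence of $II$ contributes one power of $\mu$, so that
\[
\sum\big(\langle II_{ij},II_{ji}\rangle-\langle II_{ii},II_{jj}\rangle\big)=\mu^2\sum_{i,j}\big(\langle H_{ij},H_{ji}\rangle_Y-\langle H_{ii},H_{jj}\rangle_Y\big)+o(\mu^2),
\]
the metric corrections $\pm\mu^2\Diff F^{T}\Diff F$ and the non-orthonormality of $\{e_i^{'}\}$ perturbing the contraction only at orders $\mu^4$ and $o(\mu^2)$. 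Dividing by $\mu^2$ and letting $\mu\to0$ removes the remainder and gives the identity of the lemma.

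I expect the main difficulty to be this last computation: the passage from $Y$ to $\mu Y$ enters in several places at once — the orthonormal frames of $T(\mu Y)$ against $TY$, the normal metric $g'_{\alpha\beta}$ and its inverse, and the scaling $H_{\mu f}=\mu H_f$ — and one must check that, after forming the Gauss quadratic and dividing by $\mu^2$, every metric-correction and frame-distortion term is genuinely subleading. A secondary point needing care is the identification used above, namely that $\tensor*{\exp}{^{X\times\mu Y}_{Gr_{\mu f}}}$ produces a submanifold with vanishing second fundamental form at $\tilde y$ whose tangent plane there equals $T_{\tilde y}Gr_{\mu f}$, so that the ambient curvature $\rho$ drops out exactly rather than merely to leading order in $\mu$.
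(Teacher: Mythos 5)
Your proof follows essentially the same route as the paper's: the Gauss equation reduces the curvature difference to a quadratic expression in the second fundamental form, which is then rewritten via Lemma~\ref{LemSecondHessianFormula} and evaluated in the limit using the scaling facts of Proposition~\ref{PropScaled}. The only addition is that you spell out why the exponential image has vanishing second fundamental form at the basepoint and hence exactly cancels the ambient curvature contraction --- a justification the paper leaves implicit --- so the argument is correct and matches the intended one.
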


\begin{proof}
Write the Gauss equation, the scalar curvature definition and apply 
\\
Lemma\,\ref{LemSecondHessianFormula}: 
$r_{Gr_f} - r_{\tensor*{\exp}{^E_{Gr_{f}}}} = 
\tensor*{g}{^{'}^i^k}\tensor*{g}{^{'}^j^l}
\sum_{\alpha, \beta =1}^{m} g^{' \alpha \beta} 
\Big(\tensor*{H}{^{\alpha}_{ik}} \tensor*{H}{^{\alpha}_{jl}} - 
\tensor*{H}{^{\alpha}_{il}} \tensor*{H}{^{\alpha}_{jk}} \Big).$
Then substitute 
$\mu f$ for $f$ as $\mu$ tends to 0, apply Prop. \ref{PropScaled} and obtain the needed equality. \textrm{\qed}
\end{proof}

\begin{lemma} \label{LemMean}
$\lim_{\mu\to 0}\frac{1}{\mu} \tensor*{h}{^{X\times\mu Y}_{Gr_{\mu f}}} = \| (\tr H^1,\dots,\tr H^m )\|_Y$.
\end{lemma}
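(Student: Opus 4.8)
The plan is to mirror the proof of Lemma \ref{LemScalar}, with the Gauss equation and the scalar curvature replaced by the definition of the mean curvature vector as the metric trace of the second fundamental form. Writing $g'^{ij}$ for the contravariant induced metric of $Gr_f$ in the basis $\{e'_i\}$, the mean curvature vector is
\[
\tensor*{h}{^{E}_{Gr_f}}=\sum_{i,j=1}^{n}g'^{ij}\,II_{Gr_f}(e'_i,e'_j).
\]
Substituting Lemma \ref{LemSecondHessianFormula} for $II_{Gr_f}$ gives the closed form
\[
\tensor*{h}{^{E}_{Gr_f}}=\sum_{i,j=1}^{n}\sum_{\alpha,\beta=1}^{m}g'^{ij}\,g'^{\alpha\beta}\,\tensor*{H}{^{\alpha}_{ij}}\,e'_\beta ,
\]
expressing the mean curvature vector through the induced metric on $Gr_f$, the induced metric on its normal bundle, and the components $\tensor*{H}{^{\alpha}_{ij}}$ of $H_f$. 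This is the direct analogue of the intermediate identity used in the proof of Lemma \ref{LemScalar}.

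Next I would substitute $\mu f$ for $f$ and send $\mu\to 0$, invoking Proposition \ref{PropScaled} just as in Lemma \ref{LemScalar}. That proposition controls how each factor above rescales: the covariant induced metric on $Gr_{\mu f}$ is $E-\mu^{2}\Diff F^{T}\Diff F+o(\mu^{2})$, so $g'^{ij}\to\delta_{ij}$; $H_{\mu f}=\mu H_f$; and the same type of computation, applied now to the normal bundle, shows that the frame $\{e'_\beta\}$ and its co-metric $g'^{\alpha\beta}$ rescale so that, after the division by $\mu$ in Definition \ref{RiemanDef}, the remaining data converges to the standard one. Collecting these limits, the Kronecker deltas collapse the index sums exactly as in Lemma \ref{LemScalar}, leaving
\[
\lim_{\mu\to 0}\frac{1}{\mu}\,\tensor*{h}{^{X\times\mu Y}_{Gr_{\mu f}}}=\sum_{\beta=1}^{m}\Big(\sum_{i=1}^{n}\tensor*{H}{^{\beta}_{ii}}\Big)e_\beta=\sum_{\beta=1}^{m}(\tr H^{\beta})\,e_\beta ,
\]
whose $Y$-norm is $\|(\tr H^{1},\dots,\tr H^{m})\|_Y$ since $\{e_\beta\}$ is $Y$-orthonormal; this is the asserted identity.

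The step I expect to be the main obstacle is precisely this bookkeeping of the powers of $\mu$ in the ambient metric of $X\times\mu Y$. Because the answer here is a normal vector rather than a number, one cannot simply cite the scaling of the induced metric on $Gr_{\mu f}$; one must also check how the normal space of $Gr_{\mu f}$ in $X\times\mu Y$ degenerates as $\mu\to 0$ (the unnormalized frame $\{e'_\beta\}$ collapses toward the $Y$-directions, while $g'^{\alpha\beta}$ blows up correspondingly), and verify that these two effects compensate together with $H_{\mu f}=\mu H_f$ to give a finite, correctly normalized limit -- and, finally, that the length used to pass from the mean curvature vector to $\tensor*{h}{^{X\times\mu Y}_{Gr_{\mu f}}}$ produces the $Y$-norm in the limit rather than an extra power of $\mu$. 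Once this is pinned down -- which is the content of Proposition \ref{PropScaled} applied to the normal bundle as well as to $Gr_{\mu f}$, plus routine continuity-at-$\mu=0$ checks -- the remaining manipulations are the same index contractions as in Lemma \ref{LemScalar}.
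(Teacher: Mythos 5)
Your proposal is correct and follows essentially the same route as the paper: express the mean curvature via Lemma \ref{LemSecondHessianFormula} in terms of $\tensor*{H}{^{\alpha}_{ij}}$ and the induced metrics, then substitute $\mu f$ and invoke Proposition \ref{PropScaled} to evaluate the limit. The only cosmetic difference is that the paper contracts everything into the squared scalar $\tensor*{h}{^E_{Gr_f}}^2$ from the start rather than carrying the mean curvature vector and taking the norm at the end; the $\mu$-bookkeeping for the normal frame that you flag as the main obstacle is exactly the point the paper also leaves to Proposition \ref{PropScaled} and the omitted ``minor formal details.''
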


\begin{proof}
$ 
\tensor*{h}{^E_{Gr_f}} ^2 = \textrm{(by Lemma \ref{LemSecondHessianFormula}) }
\sum_{\gamma, \delta = 1}^{m} g^{'ii} g^{'ii} g^{'}_{\alpha \beta} g^{'\alpha\gamma} \tensor*{H}{_f^\gamma _i_i} g^{'\beta\delta} \tensor*{H}{_f^\delta _i_i}  .
$
For $\mu f$:
\\ 
$\lim_{\mu\to 0}\frac{1}{\mu} \tensor*{h}{^{X\times\mu Y}_{Gr_{\mu f}}} 
=\textrm{ (by Prop. \ref{PropScaled}) }
\lim_{\mu\to 0} \Big(\sum_{i, \alpha} \tensor*{H}{_f^\alpha _i_i} \tensor*{H}{_f^\alpha _i_i} + o(1) \Big)^{\frac{1}{2}} =
\\
= \| (\tr H^1,\dots,\tr H^m )\|_Y
\textrm{\qed}
$
\end{proof}

Theorem \ref{MainTheo} follows from Lemmas \ref{LemScalar} and \ref{LemMean}. The formulation of Theorem \ref{MainTheo} is obtained by substitution of $f$ with $L$.

\section{The Experiments}
\subsubsection{Experimental Setup.}
We apply our blob detection framework to a chemical compounds classification problem, called also the QSAR problem \cite{qsar}. The task is to distinguish active and non-active compounds using their structure. Each compound is represented by a triangulated molecular surface \cite{molecular} and several physico-chemical and geometrical properties on the surface. So an input data element can be modeled as a 2-dimensional manifold $X$ with a vector-valued function $f(x):X \to \bbbr^m$. We use the following properties: the electrostatic and the steric potentials, the Gaussian and the mean curvatures. These properties are calculated in each triangulation vertex.
\subsubsection{Implementation.}
We use Riemannian blob detection for the construction of descriptor vectors. The procedure is the following:
\begin{enumerate} 
\item Detect blobs by our method in each compound surface;
\item	Form pairs of blobs on each surface;
\item Transform the blobs pairs into vectors of fixed length by using the bag of words approach \cite{bag}.
\end{enumerate}
The Riemannian blob response functions are calculated for each triangulation vertex $v$. The procedure is the following:
\begin{enumerate} 
	\item Find the directional derivatives $\partderiv{L_i}{z_j}$ by the finite differences approximation, where $z_j$̅ are the directions from $v$ to its neighbour vertices. 
	\item Find the differential $\Diff L=(\Diff L_i)$ by solving the overdetermined linear system $\Diff L(Z)=\partderiv{L_i}{z_j}$ , $Z$  is a matrix which columns are vectors $z_j$.
	\item Find the covariant derivatives of the differential in the neighbour directions, i.e. find $\CovariantDerivManif{X}{z_j} \Diff L$ for each $j$ as by $\CovariantDerivManif{X}{z_j} \Diff L =\Proj_{T_xX} ( \CovariantDerivManif{\bbbr^3}{z_j} \Diff L$). $\CovariantDerivManif{\bbbr^3}{z_j} \Diff L$ are found by the finite differences approximation.
	\item Find the covariant differential $\CovariantDiffManif{X} \Diff L$ by solving the overdetermined linear system 
	$\CovariantDiffManif{X} \Diff L(Z)=\CovariantDerivManif{X}{z_j} \Diff L$ , $Z$  is a matrix which columns are vectors $z_j$.
	\end{enumerate}
	$\CovariantDiffManif{X} \Diff L=\{\tensor*{H}{_i_j^\alpha} \}$ is obtained. Calculate $\FRScalar(x,t)=\sum_{\alpha=1}^{m}\det H^{\alpha}$,
	\\
	$\FRMean(x,t)=\|\tr H^{\alpha}\|$.	
	\subsubsection{The Results.}
An example of the algorithm result is presented in Fig. \ref{fig:result}.
\\
We compare the prediction models built on the base of the following blob detection methods: 
\begin{enumerate}
\item	Riemannian blob detection with $\FRScalar$ as a blob response function;
\item	A naive method of applying blob detection to each channel separately;
\item	Riemannian blob detection with $\FRMean$ as a blob response function. It coincides with the method \cite{ColorBlob}, adapted to the case of 2D surface;
\item	The method of adaptive neighbourhood projection \cite{GROM}. It is adapted by us to the case of 2D surface.
\end{enumerate}
The feature reduction SVM \cite{SVM} is used for construction of the prediction model. The cross-validation functional \cite{cross} is used as an index of the performance quality. The test data is the following: 3 datasets (bzr, er\_lit, cox2) from \cite{kernel}, 3 datasets (glik, pirim, sesq) from Russian Oncology Science Center. The results are presented in Table \ref{tbl:result}.

\begin{figure}
\begin{floatrow}
\ffigbox{%
    \includegraphics[scale=0.25,width=170pt,height=93pt]{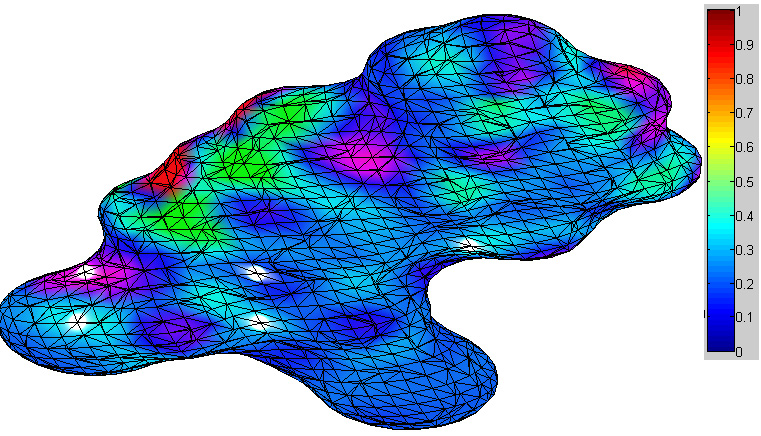}  
}{%
  \caption{A molecular surface with $\FRScalar$ on it and found centers (denoted by white color) 
	of blobs of radii 3.}
  \label{fig:result}
}
\capbtabbox{%
  \begin{tabular}{| l | l | l | l | l |}
    \hline
		 & $\FRScalar$ & naive & $\FRMean$ &	Adapt. \cite{GROM} \\ \hline
		glik	& 1.0 &	0.954 &	0.975 &	1.0 \\ \hline
		pirim	& 0.99 & 0.96 &	0.97 &	0.98 \\ \hline
		sesq	& 1.0 &	0.98 &	0.976 &	1.0 \\ \hline
		bzr	& 0.992 &	0.971 &	0.975 &	0.983 \\ \hline
		er\_lit & 0.98 &	0.961 &	0.956 &	0.98 \\ \hline
		cox2 & 0.991 & 0.967 & 0.985 &	0.986 \\ \hline    
  \end{tabular}
}{%
  \caption{The results: the cross-validation of the models, based on feature vectors built by the blob detection methods.}%
	\label{tbl:result}
}
\end{floatrow}
\end{figure}

Riemannian blob detection with $\FRScalar$ as a blob response function is the best performing method. This shows the effectiveness of our approach. This particular method for vector-valued functions on 2D surfaces wasn't presented in the literature before. 

\section{Conclusion and Future work}
We propose the Riemannian framework for blob detection in manifold-valued images. This framework is based on the definition of the blob response functions by means of the image graph curvatures. Our approach gives new methods for the uncovered problems and coincides with classical blob detection for the grayscale case. The experiments results show the effectiveness of the proposed approach.
\\
The next direction for the research is a generalization of our framework to the case of sections of non-trivial fiber bundles. In particular, such generalization will cover an important case of tangent vector fields.

\section*{Acknowledgments}
The authors want to thank Dr. Alexey Malistov for valuable discussions and the help with the article editing.

\end{document}